\def\Figref#1{Figure~\ref{#1}}
\def\Tabref#1{Table~\ref{#1}}
\def\secref#1{section~\ref{#1}}
\def\Secref#1{Section~\ref{#1}}
\def\eqref#1{equation~\ref{#1}}
\def\Algref#1{Algorithm~\ref{#1}}
\def\1{\bm{1}}
\def\vmu{{\bm{\mu}}}
\def\vtheta{{\bm{\theta}}}
\def\vx{{\bm{x}}}
\def\evsigma{{\sigma}}
\DeclareMathAlphabet{\mathsfit}{\encodingdefault}{\sfdefault}{m}{sl}
\SetMathAlphabet{\mathsfit}{bold}{\encodingdefault}{\sfdefault}{bx}{n}
\def\sA{{\mathbb{A}}}
\def\sK{{\mathbb{K}}}
\def\sS{{\mathbb{S}}}
\def\sT{{\mathbb{T}}}
\newcommand{\E}{\mathbb{E}}
\newcommand{\LCB}{\mathrm{LCB}}
\newcommand{\UCB}{\mathrm{UCB}}
\DeclareMathOperator*{\argmax}{arg\,max}
\definecolor{commentgray}{rgb}{0.5,0.5,0.5}
\title{\method: Efficient Keyframe Selection for Long Video Understanding}
\author{\textbf{Zirui Zhu$^{1,2\,\dagger}$\quad}
\textbf{Hailun Xu$^{2}$\quad}
\textbf{Yang Luo$^{1}$\quad}
\textbf{Yong Liu$^{1}$} \\
\textbf{Kanchan Sarkar$^{2\,\diamond\,\dagger}$\quad}
\textbf{Zhenheng Yang$^{2\,\diamond\,\dagger}$\quad}
\textbf{Yang You$^{1\,\dagger}$} \\
$^{1}$National University of Singapore \quad
$^{2}$TikTok \\
$^{\diamond}$\,Project Lead \quad
$^{\dagger}$\,Corresponding Author \\
\texttt{\{zirui, youy\}@comp.nus.edu.sg} \\
\texttt{\{kanchan.sarkar, yangzhenheng\}@tiktok.com}
}
\newcommand{\set}[1]{\mathcal{#1}}
\newcommand{\ie}{\emph{i.e., }}
\newcommand{\w}{\textbf{\textit{w/}}\xspace}
\newcommand{\wo}{\textbf{\textit{w/o}}\xspace}
\newcommand{\TopM}{\mathrm{TopM}}
\newtheorem{theorem}{Theorem}[section]
\DeclareRobustCommand{\method}{\textsc{Focus}\xspace}
\newcommand{\fullname}{Frame-Optimistic Confidence Upper-bound Selection}
\definecolor{IncrGreen}{rgb}{0.25, 0.55, 0.25}
\definecolor{DecrRed}{rgb}{0.55, 0.25, 0.25}
\newcommand{\qinc}[1]{\footnotesize\textcolor{IncrGreen}{$\uparrow #1$}}
\begin{document}

\maketitle

\begin{abstract}
Multimodal large language models (MLLMs) represent images and video frames as visual tokens. Scaling from single images to hour-long videos, however, inflates the token budget far beyond practical limits. Popular pipelines therefore either uniformly subsample or apply keyframe selection with retrieval-style scoring using smaller vision-language models. However, these keyframe selection methods still rely on pre-filtering before selection to reduce the inference cost and can miss the most informative moments.

We propose \method, \emph{\fullname}, a training-free, model-agnostic keyframe selection module that selects query-relevant frames under a strict token budget. \method formulates keyframe selection as a combinatorial pure-exploration (CPE) problem in multi-armed bandits: it treats short temporal clips as arms, and uses empirical means and Bernstein confidence radius to identify informative regions while preserving exploration of uncertain areas. The resulting two-stage exploration-exploitation procedure reduces from a sequential policy with theoretical guarantees, first identifying high-value temporal regions, then selecting top-scoring frames within each region.
On two long-video question-answering benchmarks, \method delivers substantial accuracy improvements while processing less than 2\% of video frames. For videos longer than 20 minutes, it achieves an 11.9\% gain in accuracy on LongVideoBench, demonstrating its effectiveness as a keyframe selection method and providing a simple and general solution for scalable long-video understanding with MLLMs.
Code is available at \url{https://github.com/NUS-HPC-AI-Lab/FOCUS}.
\end{abstract}

\section{Introduction}\label{sec:intro}





\begin{quote}
\emph{“The art of being wise is the art of knowing what to overlook.”}\
\hspace*{\fill}— William James
\end{quote}

Recent advances in large language models (LLMs) and multimodal large language models (MLLMs) have significantly improved visual understanding and reasoning. In current frameworks, images are encoded into visual tokens aligned with text and jointly processed by the LLM.
Extending this paradigm to videos—especially long, untrimmed ones—introduces a key challenge: the sheer number of frames leads to an overwhelming number of visual tokens, making inference computationally prohibitive.

A common solution is aggressive downsampling~\citep{wang2022internvideo,lin2023video,maaz2023videochatgpt,zhang2024video}, but uniformly sampling a handful of frames (e.g., 64 from a one-hour video) often misses critical content~\citep{tang2025adaptive,zhang2025q}. Increasing the frame rate, on the other hand, causes token explosion~\citep{wang2024internvideo2}. This trade-off motivates the need for keyframe selection: choosing a small set of informative frames that preserve semantics while staying within token limits.

Recent methods address this by scoring frame relevance with pre-trained vision-language encoders (e.g., CLIP~\citep{radford2021learning} or BLIP~\citep{li2022blip}) and then pick the highest-relevance frames~\citep{tang2025adaptive,zhang2025q}. These text-image matching approaches are typically training-free and plug in easily before the visual encoder in MLLM stacks, retrieving frames with higher relevance other than uniform sampling.
Despite their success, current keyframe selection methods still face scalability and efficiency limitations. For a one-hour video at 30~fps (over $10^5$ frames), exhaustively scoring all frames entails on the order of $10^{11}$-$10^{12}$ FLOPs with a vision-language encoder like BLIP~\citep{li2022blip}. This scaling pressure forces existing methods to uniformly sample the video to lower frame rate before the scoring process. This pre-filtering process before keyframe selection undermines the goal of identifying most informative keyframes from all frames~\citep{zhang2025q,tang2025adaptive}.

\begin{wrapfigure}{r}{0.43\textwidth}
  \centering
  \includegraphics[width=0.4\textwidth]{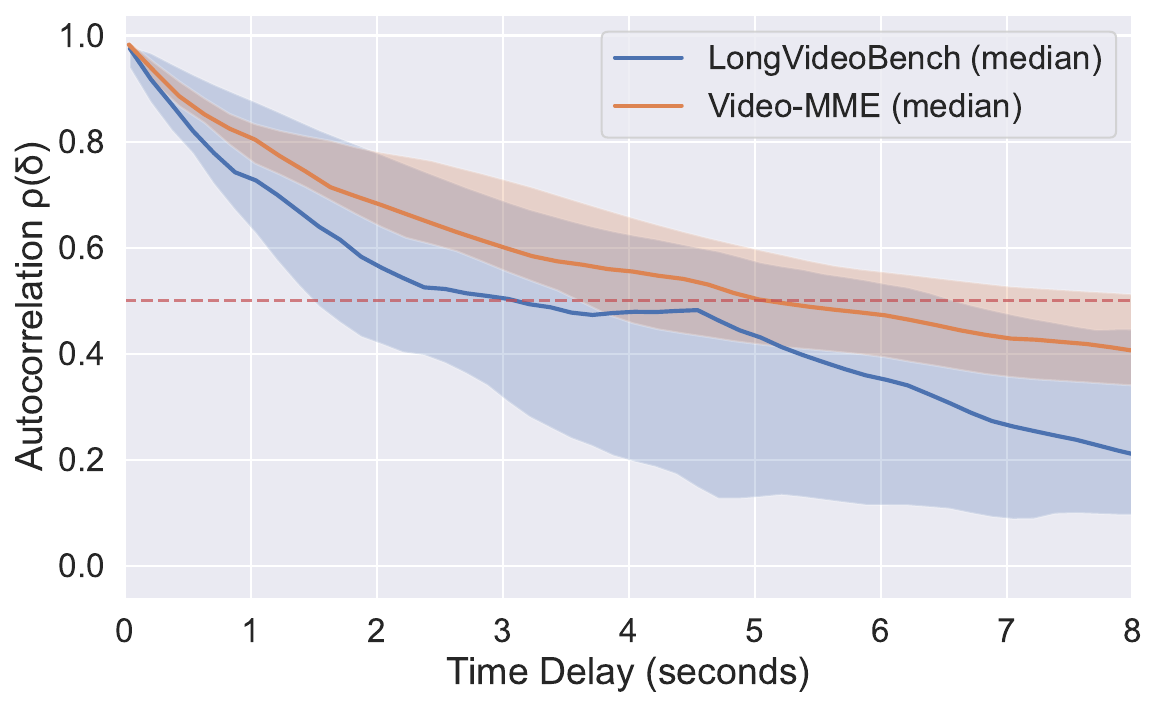}
  \caption{Temporal autocorrelation (ACF) of per-frame query relevance on LongVideoBench and Video-MME. We compute frame-level relevance per video and take the ACF over time lags (seconds); solid lines show the median across videos and shaded bands the interquartile range. The dashed line marks the correlation half-life level ($\rho(\delta)=0.5$).}
  \label{fig:acf_analysis}
\end{wrapfigure}

In this work, we propose \method, \emph{Frame-Optimal Confidence Upper-Bound Selection}, a training-free, plug-and-play keyframe selection method designed to process extremely long videos with minimal computational overhead. \method is easy to implement in practice while offering an elegant theoretical foundation.

The key insight behind \method is grounded in the observation that natural videos exhibit strong temporal locality: adjacent frames are highly correlated in appearance and motion~\citep{wiegand2003overview,wang2016temporal,wang2022internvideo}. This local smoothness naturally extends to frame-query relevance scores. 

Concretely, for each video-query pair we compute a frame-level relevance sequence $\{r_t\}$, where $r_t$ is the cosine similarity between the visual embedding of frame $t$ and the text embedding of the query produced by BLIP. We then measure temporal dependence via the autocorrelation function (ACF)
$\rho(\delta) = \mathrm{corr}(r_t, r_{t+\delta})$
at lag $\delta$ (in seconds), and aggregate $\rho(\delta)$ across videos. As illustrated in \Figref{fig:acf_analysis}, both LongVideoBench and Video-MME exhibit strong short-range correlation: the median ACF remains above $0.5$ for roughly the first $5$ seconds. 

This observation implies that exhaustive scoring of all frames is unnecessary. Instead, we can formulate keyframe selection as a bandit problem to adaptively allocate computation: quickly filtering out irrelevant temporal regions, concentrating scoring on promising segments, and ultimately prioritizing the most informative keyframes.

\method first partitions the video into short temporal clips, each treated as an arm in a multi-armed bandit. The clip selection is then framed as a Combinatorial Pure-Exploration (CPE) problem: the goal is to identify a subset of arms that maximizes expected cumulative relevance under a limited budget.
Each arm maintains an empirical mean relevance and a Bernstein-style confidence radius. Computation is adaptively allocated to clips that are either promising (high mean) or uncertain (large confidence radius), following an optimism-in-the-face-of-uncertainty principle. This iterative process enjoys theoretical convergence guarantees.
To leverage parallel computation, we reduce the iterative strategy to a coarse-to-fine schedule: optimistic means guide exploration, while unbiased empirical means inform final arm selection. Within each selected arm, we extract the top-relevance frames to construct the final keyframe set.

We validate the effectiveness of our approach on two video understanding benchmarks, including LongVideoBench~\citep{wu2024longvideobench} and Video-MME~\citep{fu2025video}. The proposed \method is tested as an off-the-shelf module on with four popular MLLMs.
\method improves answer accuracy over state-of-the-art keyframe selection baselines across benchmarks while maintaining lower inference cost. The gains are especially pronounced on long-form videos: for videos longer than 20 minutes on LongVideoBench, \method delivers a 11.9\% accuracy improvement while still cutting inference cost.

In summary, our main contributions are three-fold: (1) We formulate query-aware keyframe selection as a budgeted \emph{combinatorial pure-exploration} (CPE) problem in a multi-armed bandit setting; (2) We introduce \method, a training-free, model-agnostic keyframe selection module that selects query-relevant frames under a strict token budget; (3) We validate the effectiveness of \method on two long-video understanding benchmarks, achieving consistent gains across four popular MLLMs.

\section{Method}
\label{sec:method}
\begin{figure}[t]
  \centering
  \includegraphics[width=1.0\linewidth]{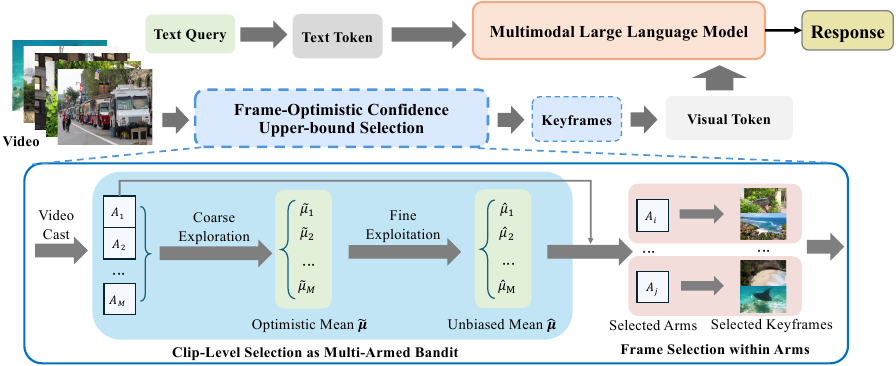}
  \caption{Overview of \method. \method partitions videos into fixed-length clips as bandit arms, applies optimistic confidence upper-bound arm selection and selects final keyframes within each promising arms.}
  \label{fig:method}
\end{figure}

\subsection{Problem Formulation}
\label{sec:method:formulation}
\paragraph{Keyframe Selection Setup.}
Let a video be $V=(\vx_1,\ldots,\vx_T)$ and denote the corresponding text query as $q$. Let the frame index set be $\sT=\{1,\ldots,T\}$. A downstream MLLM $\Phi$ consumes a subset of frames indexed by $\sK\subseteq\sT$ with $|\sK|=k$ and produces an answer $\hat a=\Phi\big(q,\{\vx_t\}_{t\in\sK}\big)$. Let $R_\Phi(\sK \mid V,q)$ denote the task-level utility of the selected frames (e.g., quality of generated answer, relevance to query, or other performance metrics).

\paragraph{Oracle and Surrogate Objective.}
The oracle objective chooses $\sK$ to maximize expected utility:
\begin{equation}
  \label{eq:oracle}
  \sK^{\mathrm{oracle}}(V,q) = \argmax_{\sK\subseteq\sT,\,|\sK|=k}
  \; \E \big[ R_\Phi(\sK \mid V,q) \big],
\end{equation}
Direct optimization to \eqref{eq:oracle} is infeasible due to the combinatorial search space and the high cost of black-box evaluations of $\Phi$.
We further expand the task-level utility $R_\Phi(\sK \mid V,q)$ to a summation of frame-level utility $y_t \in [0,1]$:
\begin{equation}
  \label{eq:frame_oracle}
  \sK^{\star} = \argmax_{\sK\subseteq\sT,\,|\sK|=k} \; \E \big[ \sum_{t\in\sK} y_t \big].
\end{equation}
However, estimating the contribution of each frame $t$ to the task-level utility is also intractable. We therefore posit that $y_t$ is indirectly observable via a vision-language encoder $\psi$ that outputs a relevance score $r_t = \psi(\vx_t, q; \vtheta) = y_t + \epsilon_{\psi}$, where $\epsilon_{\psi}$ denotes encoder-induced noise. We assume $\epsilon_{\psi}$ follows some distribution that are supported on $[0,1]$ and with zero mean and $\sigma_{\psi}^2$ variance. Under this assumption, the relevance score $r_t$ is a unbiased estimator of $y_t$ which is also commonly used in many works~\citep{tang2025adaptive,yu2024frame} implicitly.

Exhaustively scoring all $T$ frames to get $\{r_t\}$ is computationally prohibitive, especially for hourly long videos which contains over $10^5$ frames.
This computational constraint motivates us to model keyframe selection under budget constraints, where we strategically allocate a limited sampling budget to identify the most promising temporal segments before producing the final set of $k$ keyframes.
Instead of directly optimizing \eqref{eq:frame_oracle} at the frame level, we will approximate it through a combinatorial pure-exploration multi-armed bandit formulation at the clip level, which significantly reduces exploration cost.

\subsection{Clip-Level Selection as Multi-Armed Bandit}
\label{sec:method:bandit}


For a video $V=(\vx_1,\ldots,\vx_T)$, we partition the timeline into $M$ non-overlapping fixed-length clips $\set{A}=\{A_a\}_{a=1}^M$, where each clip $A_a\subseteq\sT$ spans frames $[s_a,e_a]$ and is treated as a bandit arm.
We define pulling arm $a$ as uniformly sampling a frame $t\in A_a$ and observing its query relevance score $r_t$ as the reward.
The unseen frame-level utility of the sampled frame is modeled as $y_t\sim\nu_a$, where $\nu_a$ has mean $\mu_a$ and variance $\sigma_a^2$.



Intuitively, our goal is to focus on the most promising clips which means we have to identify the optimal subset $S^\star \subseteq \set{A}$. Formally, we define the \emph{decision class} $\sS \in 2^{\set{A}}$ as a subset of the power set of $\set{A}$. The optimal member $S^\star$ of decision class $\sS$ is defined as
\begin{equation}
  \label{eq:optimal_clip}
  S^\star = \argmax_{S \in \sS} \;\sum_{a\in S}  \mu_a .
\end{equation}
Under the classic CPE framework, the learner's objective is to identify $S^\star$ after interacting with the arms over a sequence of rounds.
In the keyframe selection setting, our final goal is to further select $k$ keyframes from the selected arms. Denote $\{k_a\}_{a=1}^{|S^\star|}$ as the number of keyframes allocated to the $a$-th selected arm. We further define the frame-level optimal keyframe subset $\sK_a^{\star}$ as
\begin{equation}
  \label{eq:optimal_frame}
  \sK_a^\star = \argmax_{\sK_a\subseteq A_a,\,|\sK_a|=k_a} \;\sum_{t\in \sK_a} y_t .
\end{equation}
The final keyframe subset $\sK^\star$ is then defined as $\sK^\star = \bigcup_{a\in S^\star} \sK_a^\star$.
Empirically, we assume the decision class $\sS$ is all size-$m$ subsets of $\set{A}$ and keyframes are equally distributed across the promising arms. This setting gives us an elegant theoretical guarantee of regret bound as shown in \secref{sec:appendix:regret-bound} and is also proved to be effective in our experiments.

\subsection{Optimistic Confidence Upper-bound Arm Selection}
\label{sec:method:batched}

\begin{algorithm}[t]
  \caption{Iterative Optimistic Confidence Upper-bound Arm Selection}
  \label{alg:arm_selection_iterative}
  \begin{algorithmic}[1]
    \Require Maximization oracle $\TopM(\{\mu_a\}, m) \to \sA \subseteq \mathcal{A}$
    \State \textbf{Initialize:} Empirical means $\hat{\mu}_0(a) \gets 0$ and $N_{0}(a)\gets 0$ for all $a$.
    \State Pull each arm $a \in \set{A}$ for $q$ times and observe the rewards.
    \State $n\gets mq$ and $N_{a}(n)\gets q$ for all $a$.
    \State Update empirical means $\hat{\mu}_a(n)$ for all $a$.
    \For{$n \gets mq, mq\!+\!1, \ldots$}
      \State $\sA_n \gets \TopM(\hat{\vmu}, m)$
      \State Compute confidence radius $\beta_a(n)$ for all $a\in\set{A}$ \Comment{$\beta_a(n)$ defined in \eqref{eq:radius_a}}
      \For{$a \gets 1$ \textbf{to} $M$}
        \If{$a \in \sA_n$}
          \State $\tilde{\mu}_a(n) \gets \hat{\mu}_a(n) - \beta_a(n)$
        \Else
          \State $\tilde{\mu}_a(n) \gets \hat{\mu}_a(n) + \beta_a(n)$
        \EndIf
      \EndFor
      \State $\tilde{\sA}_n \gets \TopM(\tilde{\vmu}, m)$
      \If{$\tilde{\sA}_n = \sA_n$}
        \State \Return $\sA_n$
      \EndIf
      \State $p_n \gets \argmax\limits_{a \in (\tilde{\sA}_n \setminus \sA_n)\, \cup\, (\sA_n \setminus \tilde{\sA}_n)} \beta_a(n)$ \Comment{break ties arbitrarily}
      \State Pull arm $p_n$ and observe the reward
      \State Update empirical means $\hat{\mu}({p_n})$ with the observed reward
      \State $N_{p_n}(n+1)\gets N_{p_n}(n)+1$
    \EndFor
  \end{algorithmic}
\end{algorithm}

\begin{algorithm}[h]
  \caption{Optimistic Confidence Upper-bound Arm Selection}
  \label{alg:arm_selection}
  \begin{algorithmic}[1]
    \Require Maximization oracle $\TopM(\{\mu_a\}, m) \to \sA \subseteq \mathcal{A}$
    \State \textbf{Initialize:} Empirical means $\hat{\mu}_0(a) \gets 0$ and $N_{0}(a)\gets 0$ for all $a$.
    \Statex \textit{// Stage I: Coarse exploration}
    \State Pull each arm $a \in \set{A}$ for $q$ times and observe the rewards. 
    \State $n\gets mq$ and $N_{a}(n)\gets q$ for all $a$.
    \State Update empirical means $\hat{\mu}$ for all $a$. 
    \State Compute confidence radius $\beta_a(n)$ for all $a\in\set{A}$
    \State $\tilde{\mu}_a(n) \gets \hat{\mu}_a(n) + \beta_a(n)$ for all $a\in\set{A}$ 
    \State $\sA_{\text{coarse}} \gets \TopM(\tilde{\vmu}, m)$ \Comment{Optimistic Means UCB}
    \Statex \textit{// Stage II: Fine-grained exploitation}
    \State Pull each arm $a \in \sA_{\text{coarse}}$ for $z$ times and observe the rewards. 
    \State Update empirical means $\hat{\mu}_a(n)$ for $a \in \sA_{\text{coarse}}$
    \State $\sA_{\text{fine}} \gets \TopM(\hat{\vmu}, m)$ \Comment{Unbiased Empirical Means}
    \State \Return $\sA_{\text{fine}}$

  \end{algorithmic}
\end{algorithm}

\subsubsection{Optimal Arm Selection.}
Generally, we play a exploration game by pulling an arm $a$ and observing the reward $r_t$ at each round $n$.
We maintain two core empirical statistics for each arm $a$ during this process: an empirical mean $\hat\mu_a(n)$ and an empirical Bernstein confidence radius (variance-adaptive) $\beta_a(n)$, following the UCV-V style bound~\citep{audibert2009exploration}:
\begin{equation}
  \label{eq:radius_a}
  \beta_a(n) \;=\; \sqrt{\frac{2\,\hat\evsigma_a^2\,\ln n}{\max(1,N_a(n))}} \;+
  \frac{3\,\ln n}{\max(1,N_a(n))}.
\end{equation}
Here $\hat\evsigma_a^2$ is the empirical variance of arm $a$, $N_a(n)$ is the number of pulls for arm $a$ at round $n$ and $n = \sum_{a\in\set{A}} N_a(n)$ is the total number of pulls. The confidence radius ensures that the empirical mean is within the confidence radius of the true mean with high probability, \ie 
\begin{equation}
\mathcal{P}\left[\left|\hat{\mu}_a(n) - \mu_a\right| \le \beta_a(n) \right] \ge 1-\frac{6}{n}.
\end{equation}
Please refer to Appendix \ref{sec:appendix:bernstein-confidence-radius} for the detailed proof.

As shown in \Algref{alg:arm_selection_iterative}, the optimistic confidence upper-bound arm selection starts with an initialization phase where we pull each arm for $q$ times and observe the relevance scores as rewards. We then update the empirical means $\hat{\mu}_a$ and compute the confidence radius $\beta_a(n)$ for each arm $a$. Note the relevance score $r_t$ is an unbiased estimator of $y_t$ so we have $\E[\hat{\mu}_a] = \mu_a$.
Then we choose the best $m$ arms using the empirical means $\hat{\mu}_a(n)$, \ie $\sA_n = \TopM(\hat{\vmu}, m)$, where $\hat{\vmu}$ is the vector of all arms' empirical means and $\TopM(\cdot, m)$ returns a set of the $m$ arms with the largest empirical means.

We further refine the arm selection by evaluating the "potential" of each arm. To be specific, for arm $a \in \sA_n$, we compute the lower confidence bound of the empirical mean, \ie $\LCB_a(n) = \hat{\mu}_a(n) - \beta_a(n)$; for arm $a \notin \sA_n$, we compute the upper confidence bound of the empirical mean, \ie $\UCB_a(n) = \hat{\mu}_a(n) + \beta_a(n)$. If 
\begin{equation}
\max_{a \notin \sA_n} \UCB_a(n) \ge \min_{a \in \sA_n} \LCB_a(n),
\end{equation}
this indicates that some arms outside the current top-$m$ set are still potential to be included in the top-$m$ set. Thus, we choose the arm $a$ that we are most uncertain about, \ie 
\begin{equation}
a = \argmax\limits_{a \in (\tilde{\sA}_n \setminus \sA_n)\, \cup\, (\sA_n \setminus \tilde{\sA}_n)} \beta_a(n).
\end{equation}
We then pull this arm $a$ for $q$ times and repeat the process until the top-$m$ set is unchanged, \ie $\sA_{n+1} = \sA_n$.
We then return the top-$m$ set $\sA_n$.

It is easy to see \Algref{alg:arm_selection_iterative} is guaranteed to return the optimal top-$m$ set $\sA_n$ with high probability (see \Secref{sec:appendix:regret-bound} for the detailed proof). However, the iterative process is empirically inefficient (or intractable) as the sequential arm-pulls and updating can not be parallelizable. We have to pull the arms one-by-one which means forward the vision-language model with batch size 1 sequentially. This costs significant waste of GPU utilization.

\subsubsection{Two-stage Arm Selection.}
To make the procedure practical and easy to parallelize, we specialize \Algref{alg:arm_selection_iterative} into the two-stage, batch variant in \Algref{alg:arm_selection}. The overall framework is shown in~\Figref{fig:method}.

\paragraph{Stage I: Coarse initialization.}
We pull each arm $q$ times in parallel and update the empirical means $\hat{\mu}_a$ and confidence radii $\beta_a(n)$ for all $a\in\mathcal{A}$. This stage coincides with the initialization phase of \Algref{alg:arm_selection_iterative} and serves as a coarse exploration pass that produces reliable per-arm statistics at low coordination cost.

\paragraph{Stage II: Fine-grained exploration (batched).}
Using the optimistic scores $\tilde{\mu}_a(n)=\hat{\mu}_a(n)+\beta_a(n)$, we select the top $\alpha m$ arms,
$\mathcal{A}_{\text{coarse}}=\TopM(\tilde{\vmu},,\alpha m)$,
and allocate an additional $z$ pulls to each $a\in\mathcal{A}_{\text{coarse}}$ (performed in a single batch). Here, $\alpha$ is a hyperparameter that controls the ratio of the coarse exploration budget to the fine-grained exploration budget. This stage is a batched counterpart of the iterative loop in \Algref{alg:arm_selection_iterative}: it implements the “optimism in the face of uncertainty” principle by concentrating samples on arms with the largest UCB values, while avoiding per-step scheduling overhead.

\paragraph{Final Arm Selection.}
After the fine exploitation, we form the final set by selecting the best $m$ arms according to the unbiased empirical means,
$\sA_{\text{fine}}=\TopM(\hat{\vmu},m)$.
This choice mirrors $\delta$-PAC identification routines, where optimistic scores guide exploration but the recommendation itself is based on the empirical means $\hat{\mu}_a(n)$ rather than the optimistic means $\tilde{\mu}_a(n)$.

\subsection{Frame Selection within Selected Arms}
\label{sec:frame-selection}
Given the selected arm set $\sA_{\text{fine}}$ and a total budget of $K$ frames, we sample $k_a$ frames per arm $a\in\sA_{\text{fine}}$ with equal allocation (i.e., $k_a=\mathrm{round}(k/|\sA_{\text{fine}}|)$, adjusted to sum to $K$).
For each arm $a$ with index set $\sT_a$ and observed rewards $\{r_{a,s}\}_{s\in S_a}$ at sampled indices $T_a \subseteq \sT_a$, we simply interpolate all rewards $\hat r_{a,t}$ within the arm using the nearest-neighbor assignment. We then form a per-arm sampling distribution according to the interpolated rewards and draw $k_a$ frames \emph{without replacement} from $p_{a}$. The final keyframe set is $\mathcal{K}=\bigcup_{a\in\mathcal{A}_{\text{fine}}}\mathcal{K}_a$.

\section{Experiments}
\label{sec:exp}

\subsection{Experimental Setup}
\label{sec:exp:setup}

\paragraph{Benchmarks}
\label{sec:exp:datasets}
We follow the LMMs-Eval framework~\cite{zhang2024lmms} and adopt the open-source evaluation protocol from AKS for benchmarks, prompts, and scoring. Our experiments focus on two long-video multiple-choice QA benchmarks: LongVideoBench~\cite{wu2024longvideobench} and VideoMME~\cite{fu2025video}. These datasets feature videos lasting up to an hour, where effective keyframe selection becomes crucial for performance. To ensure fair comparison~\citep{tang2025adaptive}, we disable subtitles, perform zero-shot evaluation, and keep model parameters frozen—varying only the frame selection strategy (our method versus uniform sampling).

\paragraph{Implementation Details}
\label{sec:exp:impl}
We test both open-source video MLLMs (Qwen2VL~\citep{wang2024qwen2}, LLaVA-OV~\citep{li2024llava}, LLaVA-Video~\citep{zhang2024video} and Qwen2-7B~\citep{yang2024qwen2} language model) and the commercial GPT-4o (0513). 
For frame relevance scoring, we use BLIP ITM~\citep{li2022blip} to compute $r_t = \psi(\vx_t, q; \vtheta)$, where $r_t$ estimates the latent frame-level utility as described in \Secref{sec:method:formulation}, which is justified as a promising choice by~\citet{tang2025adaptive}. This also ensure a fair comparison setting as the frame-level utility is estimated using the same model.

\subsection{Performance Analysis}
\label{sec:exp:main}
\begin{table}[h!]
    \centering
    {\setlength{\tabcolsep}{10pt}
    \begin{tabular}{@{} l l l l l @{} }
    \toprule
    \textbf{Model} & \textbf{\#Frame} & \textbf{LLM} & \textbf{LongVideoBench} & \textbf{Video-MME} \\
    \midrule
    GPT-4V              & 256 & -- & 61.3 & 59.9 \\
    Gemini-1.5-Flash       & 256 & -- & 61.6 & 70.3 \\
    Gemini-1.5-Pro         & 256 & -- & 64.0 & 75.0 \\
    VideoLLaVA      & 8 & 7B & 39.1 & 39.9 \\
    MiniCPM-V 2.6    & 64 & 8B & 54.9 & 60.9 \\
    InternVL2-40B          & 16 & 40B & 59.7 & 61.2 \\
    LLaVA-Video-72B        & 64 & 72B & 63.9 & 70.6 \\
    \midrule
    GPT-4o                 & 32 & -- & 51.6 & 61.8 \\
    GPT-4o \w Ours         & 32 & -- & \textbf{54.8} \qinc{3.2} & \textbf{62.5} \qinc{0.7} \\
    Qwen2-VL-7B               & 32  & 7B & 55.6 & 57.4 \\
    Qwen2-VL-7B \w Ours        & 32  & 7B & \textbf{62.3} \qinc{6.7}  & \textbf{59.7} \qinc{2.3}  \\
    LLaVA-OV-7B               & 32  & 7B & 54.8 & 56.5 \\
    LLaVA-OV-7B \w Ours        & 32  & 7B & \textbf{60.7} \qinc{5.9}   & \textbf{58.3} \qinc{1.8}   \\
    LLaVA-Video-7B            & 64  & 7B & 58.9 & 64.4 \\
    LLaVA-Video-7B \w Ours     & 64  & 7B & \textbf{63.5} \qinc{4.6} & \textbf{65.4} \qinc{1.0}   \\
    \bottomrule
    \end{tabular}
    }
    \caption{Video-question answering accuracy (\%) of various MLLMs on LongVideoBench and Video-MME. \method is integrated into GPT-4o, Qwen2-VL, LLaVA-OV, and LLaVA-Video. The suffix “\w Ours” denotes models using keyframes selected by our method; otherwise, frames are uniformly sampled. \textbf{\#Frame} indicates the number of frames provided to the MLLM, and \textbf{LLM} denotes the language model size. We also include performance of additional popular MLLMs for reference.}
    \label{tab:main_results}
\end{table}
We evaluate \method by using it to select keyframes as the visual input for the four aforementioned MLLMs, and compare it against the commonly used uniform sampling strategy. The results on LongVideoBench and Video-MME are summarized in \Tabref{tab:main_results}.

\paragraph{Improved Performance via Frame Selection.}
As shown in \Tabref{tab:main_results}, \method consistently outperforms uniform sampling across both open-source and closed-source MLLMs on both LongVideoBench and Video-MME. 

Specifically, on LongVideoBench, \method improves accuracy by 3.2\% on GPT-4o, 6.7\% on Qwen2-VL-7B, 5.9\% on LLaVA-OV-7B, and 4.6\% on LLaVA-Video-7B. On Video-MME, the gains are 0.7\%, 2.1\%, 1.8\%, and 1.0\% on the same models, respectively.

We observe a clear trend that larger MLLMs with more frame inputs tend to achieve better performance. However, \method significantly narrows this gap by identifying the most informative frames, thereby boosting the performance of smaller MLLMs. For instance, Qwen2-VL-7B with \method outperforms Gemini-1.5-Flash on LongVideoBench, despite using 8$\times$ fewer input frames. This highlights the effectiveness of \method as a plug-and-play keyframe selection module for a wide range of MLLMs.

\paragraph{Interpretability through Visualizations.} 
We visualize the frames selected by \method alongside uniformly sampled frames for two examples from LongVideoBench and Video-MME in \Figref{fig:visual_comparison}.

Note that LongVideoBench and Video-MME differ substantially in how their video-question pairs are constructed. In general, LongVideoBench features more detailed and specific questions, while Video-MME focuses on concise, high-level queries. Moreover, LongVideoBench tends to ask about specific scenes or events, whereas Video-MME emphasizes global understanding of the video content.

To highlight this distinction, we manually mark the most informative frames relative to the query using yellow stars. These frames are more temporally concentrated in LongVideoBench (around specific events) and more uniformly distributed across the timeline in Video-MME.

This difference helps explain why \method achieves greater performance gains on LongVideoBench: our method assumes that frame-level relevance scores are i.i.d., a common setting in multi-armed bandit formulations. This assumption neglects temporal dependencies between video segments. Consequently, retrieval-based methods for keyframe selection typically require regularization~\citep{tang2025adaptive,yu2024frame} to promote diversity and ensure coverage.

If temporal dependencies between segments (arms) are taken into account, the problem setting shifts toward Lipschitz or metric bandits~\citep{kleinberg2008multi,bubeck2011x}, and contextual bandits~\citep{chu2011contextual,agarwalb14}. We leave such extensions to future work.

\begin{figure}[t]
  \centering
  \includegraphics[width=1.0\linewidth]{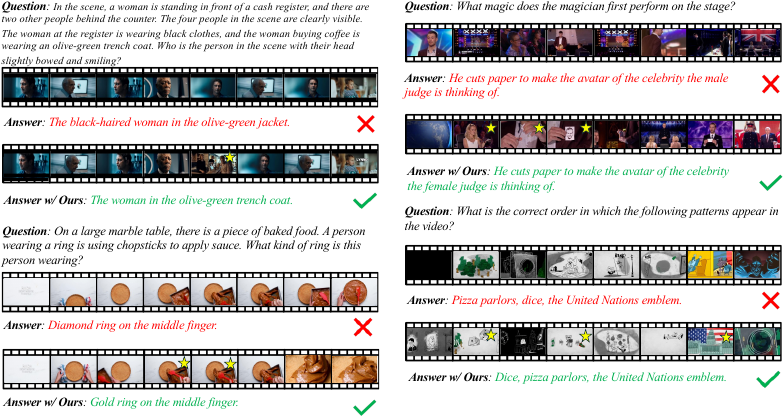}
  \caption{
    Comparison between uniformly sampled frames and those selected by \method. 
    The left column shows two examples from LongVideoBench; the right column shows two from Video-MME. 
    Yellow stars indicate manually annotated frames that are most informative to the query, many of which are successfully captured by \method.
    }
  \label{fig:visual_comparison}
\end{figure}

\subsection{Comparison with State-of-the-Art}
\label{sec:exp:sota}

\begin{table}[h]
    \centering
    {\setlength{\tabcolsep}{4pt}
    \begin{tabular}{@{} l cccc c cccc @{} }
    \toprule
    \multirow{2}{*}{\textbf{Method}} & \multicolumn{4}{c}{\textbf{LongVideoBench}} &  & \multicolumn{4}{c}{\textbf{Video-MME}} \\
     & Short & Medium & Long & Overall &  & Short & Medium & Long & Overall \\
    \midrule
    Uniform  & 67.5 & 57.4 & 51.8 & 58.9 &  & 76.4 & 62.6 & 54.3 & 64.4 \\
    Top-$K$  & \textbf{72.3} & 58.0 & 60.5 & 62.3 &  & 75.4 & 60.4 & 53.0 & 62.9 \\
    AKS      & \textbf{72.3} & \textbf{59.2} & 56.1 & 62.1 &  & 76.3 & 62.8 & 54.7 & 64.6 \\
    \textbf{\method (ours)} & \textbf{72.3} & 59.0 & \textbf{63.7} & \textbf{63.5} &  & \textbf{76.5} & \textbf{63.5} & \textbf{56.1} & \textbf{65.4}\\
    \bottomrule
    \end{tabular}
    }
    \caption{
    Comparison between our method and state-of-the-art keyframe selection baselines under matched keyframe count. 
    Results are reported by video length buckets: Short, Medium, and Long. 
    For Video-MME, we adopt its original categorization: \textit{Short} (<2 min), \textit{Medium} (4-15 min), and \textit{Long} (30-60 min). 
    For LongVideoBench, we define \textit{Short} as videos shorter than 3 minutes, \textit{Medium} as 3-20 minutes, and \textit{Long} as over 20 minutes to ensure a balanced distribution.
    }
    \label{tab:llava_video_methods}
\end{table}

To further validate the effectiveness of \method, we compare it against state-of-the-art training-free keyframe selection methods on both LongVideoBench and Video-MME. Specifically, we consider recent approaches based on vision-language similarity:
\begin{itemize}[leftmargin=1.25em]
    \item \textbf{Top-$K$}: Computes relevance scores between each frame and the query, then selects the top-$K$ scoring frames. Due to computational constraints, we apply a pre-filtering step by downsampling videos to 1 frame per second.
    \item \textbf{AKS}~\citep{tang2025adaptive}: A recent method that adaptively balances frame relevance and temporal coverage. It is considered the current state-of-the-art and also incorporates pre-filtering via downsampling to 1 frame per second~\citep{tang2025adaptive}.
\end{itemize}

\textbf{Fair comparison protocol.} We ensure a fair comparison by:  
(i) evaluating all methods using LLaVA-Video-7B, the best-performing MLLM in our setup;  
(ii) fixing the number of selected keyframes to $k=64$;  
(iii) using the same vision-language encoder (e.g., BLIP) for frame scoring whenever possible.  
Results are summarized in \Tabref{tab:llava_video_methods}.

\paragraph{Consistency across different lengths.}
\method achieves consistent performance gains across all video length categories, with particularly strong improvements on long videos. On LongVideoBench, \method outperforms uniform sampling by 11.9\% and Top-$K$ by 7.6\% on videos longer than 20 minutes. On Video-MME, the respective improvements are 1.8\% and 1.4\%.

We also observe that on short videos, all keyframe selection methods perform similarly and consistently outperform uniform sampling. We attribute this to a possible saturation in the reasoning capabilities of the underlying MLLM (LLaVA-Video-7B), where input selection plays a less critical role.

\paragraph{Efficiency comparison.}
We report the efficiency of each method in \Tabref{tab:efficiency}, measuring both the number of frames “seen” (i.e., scored by a vision-language model) and the total GPU hours required to perform keyframe selection. All GPU hours are measured using a single NVIDIA H100 (80GB) GPU on the LongVideoBench dataset.

As shown, AKS without pre-filtering is computationally infeasible in practice, as it requires scoring all video frames—amounting to over 255 GPU hours by the optimistic estimation. With pre-filtering, the cost drops significantly to 9.3 GPU hours. In contrast, \method is the most efficient: it requires only 1.6\% of the BLIP forward passes and just 5.5 GPU hours, while simultaneously achieving the best overall performance.

\begin{table}[t]
    \centering
    {\setlength{\tabcolsep}{8pt}\small
    \begin{tabular}{@{} l c c c @{} }
    \toprule
    \textbf{Method} & \textbf{Filtering-free} & \textbf{Frames Seen (\%)} & \textbf{GPU hours} \\
    \midrule
    AKS \wo pre-filtering       &    \scalebox{1.2}{\usym{2717}}      &    100      &     255     \\
    AKS \w pre-filtering       &    \scalebox{1.2}{\usym{2717}}      &    3.7      &     9.3     \\
    \textbf{\method (Ours)}   &    \usym{2714}      &    1.6      &     5.5     \\
    \bottomrule
    \end{tabular}
    }
    \caption{
        Efficiency comparison of keyframe selection methods on LongVideoBench. 
        "Pre-filtering" refers to downsampling videos to 1 fps prior to selection. 
        Note that the official AKS pipeline includes this pre-filtering step by default. 
        ``Frames Seen (\%)'' counts the proportion of frame-level BLIP forward passes relative to scoring all frames; GPU hours are measured on a single H100 (80GB).
        }
    \label{tab:efficiency}
\end{table}

\subsection{Efficiency-Accuracy Trade-off}
\label{sec:exp:efficiency}
\method exposes a natural trade-off between accuracy and computational cost through a single hyperparameter $\alpha$, which controls the fraction of arms selected for fine-grained exploration. We report accuracy and efficiency under different $\alpha$ settings in \Tabref{tab:alpha}.

\begin{table}[h!]
    \centering
    {\setlength{\tabcolsep}{8pt}\small
    \begin{tabular}{@{} l c c c @{} }
    \toprule
     & \textbf{Accuracy (\%)} & \textbf{Frames Seen (\%)} & \textbf{GPU hours} \\
    \midrule
    $\alpha = 0.1 $       &   62.9   &    1.1      &     3.5     \\
    $\alpha = 0.25$       &   63.5   &    1.6      &     5.5     \\
    $\alpha = 0.5 $       &   63.6   &    2.5      &     9.2    \\
    \bottomrule
    \end{tabular}
    }
    \caption{
    Effect of $\alpha$ on the performance and efficiency of \method. 
    ``Frames Seen (\%)'' counts the proportion of frame-level BLIP forward passes relative to scoring all frames; 
    GPU hours are measured on a single H100 (80GB).
    }
    \label{tab:alpha}
\end{table}
We observe that choice of $\alpha$ has a significant impact on the efficiency while remain stable on the performance.
With $\alpha=0.1$, \method evaluates 1.1\% of frames and finishes in 3.5 GPU hours.
At $\alpha=0.25$, the fraction rises to 1.6\% with a cost of 5.5 GPU hours, yielding 63.5\% accuracy.
Setting $\alpha=0.5$ achieves the highest accuracy (63.6\%) but requires evaluating 2.5\% of frames and 9.2 GPU hours—only a negligible gain over $\alpha=0.25$ for a substantially higher cost, indicating diminishing returns from exploring more arms.

\section{Conclusion}\label{sec:conclusion}
We addressed the core bottleneck of long-video understanding in MLLMs—the explosion of visual tokens—by introducing \method, a training-free, plug-and-play keyframe selection method that allocates computation under a strict budget. \method first partitions the video into temporal clips, treats each as an arm in a bandit problem, and then identifies query-relevant regions via a combinatorial pure-exploration strategy using empirical means and Bernstein confidence bounds. To improve efficiency, we reduce the iterative bandit process to a coarse-to-fine two-stage procedure that preserves optimism while enabling parallel inference.

Experiments on two challenging long-video QA benchmarks demonstrate that \method consistently improves accuracy across four MLLMs while processing fewer than 2\% of video frames. Our results show that lightweight, training-free keyframe selection—when guided by statistical principles—can significantly enhance the scalability and practicality of MLLMs for long-video understanding.

\section{Reproducibility Statement}
\label{sec:reproducibility}
We provide a comprehensive theoretical analysis of our method in Appendix~\ref{sec:appendix:bernstein-confidence-radius} and Appendix~\ref{sec:appendix:regret-bound}.
The source code for this work is publicly available at \url{https://github.com/NUS-HPC-AI-Lab/FOCUS}.
All models and datasets used in our study are publicly accessible.

\bibliography{reference}
\bibliographystyle{iclr2026_conference}

\appendix
\section{Appendix}
\subsection{Related Work}
\subsubsection{Multimodal Large Language Models (MLLMs) for Video Understanding}

Recent MLLMs extend large language models with visual encoders, encoding images or frames into visual tokens that are fused with text to support open-ended video understanding. Most follow an encode-project-fuse pipeline with instruction tuning, as exemplified by the LLaVA family, Video-LLaVA/Video-LLaMA/Video-ChatGPT, and LLaMA-Vid/VideoChat \citep{liu2023llava,lin2023video,zhang2023videollama,maaz2023videochatgpt,li2023llamavid,li2024videochat}. Progress has largely come from scaling data/backbones and strengthening cross-modal alignment (MiniCPM-V, InternVL/InternVL2, Qwen2-VL; data-centric and modality-binding advances via ShareGPT4Video and LanguageBind) \citep{yao2024minicpm,chen2024internvl,chen2024far,chen2024expanding,wang2024qwen2,chen2024sharegpt4video,zhu2024languagebind}, together with architectural refinements that unify multi-granularity visual inputs and tighten temporal adapters, and that improve projector efficiency or curricula (LLaVA-OneVision, LLaVA-NeXT/LLaVA-NeXT-Video, Aria, PLLaVA, Kangaroo) \citep{li2024llava,liu2024llavanext,zhang2024llavanextvideo,li2024aria,xu2024pllava,liu2024kangaroo}. Finally, several models explicitly target extended context and hierarchical summarization for long-form understanding (LongVILA, LongVA, LongVLM, LongVU) \citep{longvila,zhang2024longva,weng2024longvlm,shen2024longvu}.

However, this tokenization-first paradigm encounters \emph{token explosion} on long videos, where dense sampling yields prohibitive sequences. Recent efforts reduce the budget by compressing or restructuring tokens: MovieChat~\citep{song2024moviechat} compacts frames into sparse memory, Video-XL-2~\citep{qin2025video} synthesizes condensed tokens, and VideoStreaming~\citep{qian2024streaming} processes streams incrementally to cap tokens. Planning/tool-augmented agents (e.g., VideoAgent~\citep{wang2024videoagent}) curb perception via selective analysis, while hierarchical controllers (VideoTree~\citep{wang2025videotree}) and scaling recipes (VideoLLaMA~3~\citep{zhang2025videollama}) aid long-horizon reasoning. Beyond compression, ViLAMP~\citep{cheng2025scaling} uses mixed-precision tokenization to emphasize differential frames/patches and allocate capacity adaptively; long-context instruction-tuning such as Long-VITA~\citep{shen2025long} complements these strategies for long videos.

\subsubsection{Vision-Language Pretrained Models}
Cross-modal vision-language pretraining spans two-stream fusion, single-stream fusion, dual-encoder contrastive learning, and encoder-decoder hybrids. Two-stream models such as ViLBERT~\citep{lu2019vilbert} and LXMERT~\citep{tan2019lxmert} encode vision and text separately and fuse via cross-attention, while single-stream counterparts—VisualBERT~\citep{li2019visualbert}, VL-BERT~\citep{su2020vlbert}, UNITER~\citep{chen2020uniter}—concatenate region features with text in a unified Transformer using MLM and alignment losses. Large-scale dual encoders like CLIP~\citep{radford2021learning} and ALIGN~\citep{jia2021scaling} learn contrastive embeddings for zero-shot transfer, with FILIP~\citep{yao2022filip} improving fine-grained patch-token alignment. Hybrid objectives combine contrastive and generative training~\citep{li2021align,yu2022coca,wang2022ofa,chen2022pali} unify captioning and VQA. The BLIP family integrates vision encoders with language modeling—BLIP~\citep{li2022blip} and BLIP-2~\citep{li2023blip2} (via a lightweight Q-Former)—while Flamingo~\citep{alayrac2022flamingo} and PaLM-E~\citep{driess2023palme} inject visual inputs into large LMs for few-shot multimodal reasoning.

Extending to video, early pretraining models learned joint spatio-temporal-language representations with lightweight fusion and sparse sampling. VideoBERT~\citep{sun2019videobert} pairs frame sequences with transcripts in a BERT-style objective for retrieval and script generation, while HERO~\citep{li2020hero} and ClipBERT~\citep{lei2021clipbert} improve efficiency via hierarchical encoding and key-frame sampling for video-text retrieval and QA. Building directly on large image-text models, Clip4Clip~\citep{luo2022clip4clip} reuses CLIP encoders and matches videos to text via contrastive similarity, and FrozenBiLM~\citep{yang2022zero} freezes a bi-directional LM while aligning a video encoder for zero-shot VQA.

\subsubsection{Keyframe Selection}

In video representation learning, keyframe selection spans two major paradigms.
\paragraph{Training-free keyframe selection.}
Recent \emph{training-free} methods leverage pretrained vision-language models and lightweight heuristics to pick informative, query-relevant frames. Adaptive Keyframe Sampling (AKS) maximizes prompt-frame similarity while enforcing temporal coverage via a split-and-judge policy~\citep{tang2025adaptive}; Q-Frame ranks frames by query-conditioned importance and preserves a few at higher resolution for detail~\citep{zhang2025q}. Text-frame alignment with frozen models further enables plug-and-play selectors (KeyVideoLLM, BOLT) that boost Video-LLM performance without fine-tuning~\citep{liang2024keyvideollm,liu2025bolt}. To avoid redundancy and preserve structure under a token budget, Logic-in-Frames performs dynamic, logic-verified search~\citep{guo2025logic}, while VideoTree builds a hierarchical, query-adaptive frame pyramid that expands salient scenes~\citep{wang2025videotree}.

\paragraph{Instruction-aligned and learned selectors.}
Instruction-guided approaches train selectors with LLM/MLLM feedback: Frame-Voyager learns to query frame combinations by ranking sets with a pretrained Video-LLM~\citep{yu2024frame}, and \citet{hu2025m} supervise a lightweight selector using MLLM-derived single-frame relevance and multi-frame complementarity. Classical summarization remains relevant: supervised LSTM-based models (vsLSTM, dppLSTM; hierarchical RNNs) learn importance/diversity from human summaries~\citep{zhang2016video,zhang2018recurrent,zhao2017hierarchical}, while unsupervised RL/adversarial methods (DR-DSN, SUM-GAN) optimize diversity-representativeness or realism without labels~\citep{zhou2018deep,mahasseni2017unsupervised}; however, these are typically task-agnostic and may miss frames critical for query-driven VQA.

\subsubsection{Multi-Armed Bandits and Batched Exploration}

Multi-armed bandits (MAB) encompass both regret minimization and pure exploration. Regret-oriented methods such as UCB variants and Thompson Sampling establish logarithmic-regret foundations for sequential decision-making \citep{auer2002ucb,lai1985asymptotically,agrawal2012analysis}. Pure exploration instead targets high-confidence identification with minimal samples, formalized as best-arm (and top-$k$) identification \citep{even2006action,bubeck2009pure,kalyanakrishnan2010efficient,cao2015top}. Early elimination schemes (Successive/Median Elimination) provide PAC guarantees \citep{even2006action,even2002pac}, while confidence-bound and racing families—LUCB, UCB-E, and near-optimal lil’UCB—sharpen sample complexity and approach known lower bounds \citep{kalyanakrishnan2012pac,audibert2010best,karnin2013almost,jamieson2014lil,kaufmann2016complexity}. Beyond single arms, combinatorial pure exploration (CPE) seeks an optimal subset under structural constraints, combining bandit confidence bounds with combinatorial oracles to search exponentially large spaces efficiently \citep{chen2016combinatorial,lattimore2020bandit}.

Fully sequential adaptivity can be impractical when decisions must be made in few rounds or in parallel. Batched (parallel) bandits address this by operating over a small number of adaptivity rounds, yet retain near-sequential sample efficiency for pure exploration in theory and practice \citep{perchet2016batched,jun2016top,gao2019batched}. Batch-elimination/LUCB-style procedures match sequential complexity up to constants with only a handful of updates \citep{jun2016top}, and lower-bound trade-offs between batches and samples are well understood with matching algorithms \citep{perchet2016batched,kaufmann2016complexity,tuynman2025batch}. Recent designs such as Tri-BBAI attain asymptotically optimal fixed-confidence BAI with just three batches, underscoring the feasibility of resource-constrained exploration \citep{jin2024optimal}.

\section{Bernstein Confidence Radius}
\label{sec:appendix:bernstein-confidence-radius}

\begin{theorem}\label{thm:bernstein-confidence-radius}
    Let $N_a(n)$ be the number of pulls for arm $a$ at round $n$ and $n = \sum_{a\in\mathcal{A}} N_a(n)$ is the total number of pulls. Let $\hat{\mu}_a(n)$ be the empirical mean of arm $a$ at round $n$ and $\hat{\sigma}_a^2(n)$ be the empirical variance of arm $a$ at round $n$. We define the empirical Bernstein Confidence Radius $\beta_a(n)$ as  $$\beta_a(n) \;=\; \sqrt{\frac{2\,\hat{\sigma}_a^2\,\ln n}{\max(1,N_a(n))}} \;+ \frac{3\,\ln n}{\max(1,N_a(n))}.$$
    Then we have the following bound holds with probability at least $1-\frac{6}{n}$:
    $$\left|\hat{\mu}_a(n) - \mu_a \right| \le \beta_a(n)$$
\end{theorem}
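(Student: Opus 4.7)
The plan is to invoke a standard empirical Bernstein concentration inequality for bounded i.i.d.\ random variables \citep{audibert2009exploration} and specialize its confidence parameter to $x=\ln n$. In one-sided form, for i.i.d.\ $X_1,\dots,X_s\in[0,1]$ with empirical mean $\hat{\mu}_s$ and empirical variance $\hat{\sigma}_s^2$,
\begin{equation*}
\Pr\!\left[\hat{\mu}_s - \mu \;>\; \sqrt{\tfrac{2\hat{\sigma}_s^2\, x}{s}} + \tfrac{3x}{s}\right] \;\le\; 3 e^{-x},
\end{equation*}
and an analogous bound holds for the lower tail by applying the inequality to $1-X_i$. Choosing $x=\ln n$ immediately turns each tail probability into $3/n$, and a union bound over the two directions produces the advertised $6/n$ failure probability, with the deviation controlled by exactly the expression defining $\beta_a(n)$.

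\paragraph{Steps.}
First, I would fix arm $a$ and identify the rewards collected from it by \Algref{alg:arm_selection_iterative} with the first $N_a(n)$ terms of an i.i.d.\ sequence drawn from $\nu_a$, justified by the stationarity of $\nu_a$ and the fact that whether arm $a$ is pulled at round $t$ is measurable with respect to the past history and thus independent of the next draw from $\nu_a$. Second, I would apply the empirical Bernstein inequality in its \emph{time-uniform} form so that the upper-tail deviation is controlled simultaneously for all $s\le n$ with probability at least $1-3/n$, and symmetrically for the lower tail. Third, I would specialize $s = N_a(n)$ in both bounds and union-bound the two tails to obtain $|\hat{\mu}_a(n)-\mu_a| \le \beta_a(n)$ on an event of probability at least $1-6/n$.

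\paragraph{Main obstacle.}
The key subtlety is that $N_a(n)$ is a \emph{random} stopping time determined by the algorithm's adaptive choices, rather than a fixed sample count, so the vanilla fixed-$s$ empirical Bernstein inequality cannot be invoked at $s=N_a(n)$ directly. The cleanest resolution is a time-uniform variant---obtained via a peeling argument over dyadic blocks of $s$ or, equivalently, a maximal inequality for the underlying martingale differences---which already holds for all $s\le n$ simultaneously; specialization at the random index $s=N_a(n)$ is then immediate. A crude alternative, namely a naive union bound over $s\in\{1,\dots,n\}$ with the fixed-$s$ inequality, would inflate the exploration parameter to $x=\ln n^2$ and enlarge the constants in both terms of $\beta_a(n)$, so it would not recover the precise form in the statement; preserving the advertised constants while handling the random $N_a(n)$ is therefore the main technical hurdle I would expect to spend care on.
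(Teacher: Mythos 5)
Your core argument is exactly the paper's proof: apply the empirical Bernstein inequality of \citet{audibert2009exploration} with confidence parameter $x=\ln n$ (equivalently $\delta = 3/n$), bound each tail by $3/n$, and union-bound the two directions to get the $6/n$ failure probability; the paper additionally notes that the $N_a(n)=0$ case is trivial, which is why the $\max(1,\cdot)$ appears in $\beta_a(n)$. The one place you go beyond the paper is your ``main obstacle'': you are right that $N_a(n)$ is a random, adaptively chosen sample count, so the fixed-$s$ inequality cannot literally be evaluated at $s=N_a(n)$ --- but the paper's proof simply does exactly that, with no peeling, no union bound over $s$, and no time-uniform variant. So your proposal is, if anything, more careful than the published argument. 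Be aware, though, that your proposed fix does not fully close the gap either: the uniform-in-$s$ version of Theorem~1 in \citet{audibert2009exploration} holds with probability $1-\beta(x,t)$ where $\beta(x,t)=3\inf_{1<\alpha\le 3}\bigl(\tfrac{\ln t}{\ln\alpha}\wedge t\bigr)e^{-x/\alpha} > 3e^{-x}$, so the advertised constants ($6/n$ with exactly $\ln n$ inside the radius) are not recovered verbatim by the time-uniform route any more than by the naive union bound; a fully rigorous statement would have to either enlarge $x$ or weaken the probability, which is a (minor, standard) looseness in the theorem as stated rather than in your plan.
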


\begin{proof}
    Under the setting of frame-query relevance setting, the reward $r_t$ and latent frame reward $y_t$ is naturally bounded in $[0,1]$. Therefore, according to Bernstein inequality, for any $\delta \in (0,1)$, we have
    $$\mathcal{P}\left[\mu_a \leq \hat{\mu}_a(n) + \sqrt{\frac{2 \hat{\sigma}_a^2 \ln \frac{3}{\delta}}{N_a(n)}}+\frac{3 \ln \frac{3}{\delta}}{N_a(n)}\right] \geq 1-\delta.$$
    And symmetrically, we have
    $$\mathcal{P}\left[\mu_a \geq \hat{\mu}_a(n) - \sqrt{\frac{2 \hat{\sigma}_a^2 \ln \frac{3}{\delta}}{N_a(n)}}-\frac{3 \ln \frac{3}{\delta}}{N_a(n)}\right] \geq 1-\delta.$$
    Therefore, we have
    $$\mathcal{P}\left[\left|\hat{\mu}_a(n) - \mu_a\right| \le \sqrt{\frac{2 \hat{\sigma}_a^2 \ln \frac{3}{\delta}}{N_a(n)}}+\frac{3 \ln \frac{3}{\delta}}{N_a(n)} \right] \ge 1-2\delta.$$
    Choose $\delta = \frac{3}{n}$, then we have
    $$\left|\mu_a  - \hat{\mu}_a(n) \right| \leq \sqrt{\frac{2 \hat{\sigma}_a^2 \ln \frac{3}{\delta}}{N_a(n)}}+\frac{3 \ln \frac{3}{\delta}}{N_a(n)}.$$
    holds with probability at least $1-\frac{6}{n}$.

    When $N_a(n)=0$, the statement is trivially true. Thus, we have the following bound holds with probability at least $1-\frac{6}{n}$:
    $$\left|\mu_a  - \hat{\mu}_a(n) \right| \leq  \beta_a(n).$$

\end{proof}




\section{Regret Bound}
\label{sec:appendix:regret-bound}
\paragraph{Arm-level Regret Bound}
\begin{theorem}
    \Algref{alg:arm_selection} returns the oracle top-$s$ set $S^\star$ with probability at least $1-\frac{6M}{n}$ when terminated.
\end{theorem}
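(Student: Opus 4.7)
The plan is to establish the statement in two layers: a high-probability good event obtained by a union bound, on top of which the correctness argument becomes deterministic.

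First, I would invoke Theorem~\ref{thm:bernstein-confidence-radius} once per arm and union-bound across $a\in\set{A}$. This yields that the good event
\[
\gE \;=\; \bigcap_{a\in\set{A}}\bigl\{\,|\hat{\mu}_a(n)-\mu_a|\le\beta_a(n)\,\bigr\}
\]
holds with probability at least $1-6M/n$, which supplies the probability factor in the theorem. On $\gE$, the optimistic scores $\tilde{\mu}_a(n)=\hat{\mu}_a(n)+\beta_a(n)$ satisfy the sandwich $\mu_a \le \tilde{\mu}_a(n) \le \mu_a + 2\beta_a(n)$ uniformly in $a$, and symmetrically $\mu_a - 2\beta_a(n) \le \hat{\mu}_a(n) - \beta_a(n) \le \mu_a$. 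Everything that follows will be argued on $\gE$ only.

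Next, I would show that on $\gE$ Algorithm~\ref{alg:arm_selection} recovers $S^\star$ by verifying the two stages separately. For Stage~I, I would establish the inclusion $S^\star\subseteq\sA_{\text{coarse}}$: any $a^\star\in S^\star$ has $\tilde{\mu}_{a^\star}\ge\mu_{a^\star}$, while any $a'\notin S^\star$ has $\tilde{\mu}_{a'}\le\mu_{a'}+2\beta_{a'}(n)$; the oversampling factor $\alpha>1$ inflates the coarse budget from $m$ to $\alpha m$, absorbing the $2\beta$ overshoot on sub-optimal arms so that no arm of $S^\star$ is displaced out of the top-$\alpha m$ UCB ranking. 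For Stage~II, the additional $z$ pulls on each $a\in\sA_{\text{coarse}}$ shrink $\beta_a(n)$ by a factor of order $\sqrt{1+z/q}$, and on $\gE$ this suffices to guarantee that $\hat{\mu}_a(n)$ ranks arms inside $\sA_{\text{coarse}}$ consistently with $\mu_a$. Since $S^\star\subseteq\sA_{\text{coarse}}$, the final operation $\TopM(\hat{\vmu},m)$ then returns exactly $S^\star$.

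The main obstacle is Stage~I: converting the union-bounded confidence control into a deterministic inclusion $S^\star\subseteq\sA_{\text{coarse}}$. A clean statement likely requires either an explicit gap condition of the form $\mu_{(m)}-\mu_{(m+1)}\ge 2\max_a\beta_a(n)$ after the initial $q$ pulls, or a joint sufficient-sampling assumption linking $q$, $\alpha$, and the instance gap so that the $\alpha m$ largest UCBs are guaranteed to cover all of $S^\star$. Once that is in place, Stage~II reduces to a routine re-application of Theorem~\ref{thm:bernstein-confidence-radius} on the smaller set $\sA_{\text{coarse}}$ with a sharpened radius, and the final ranking argument follows immediately.
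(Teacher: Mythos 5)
Your first layer---union-bounding Theorem~\ref{thm:bernstein-confidence-radius} over all $M$ arms to obtain the good event $\gE=\bigcap_a\{|\hat\mu_a(n)-\mu_a|\le\beta_a(n)\}$ with probability at least $1-6M/n$---is the same move the paper makes and supplies the stated probability. The second layer is where you diverge, and the obstacle you flag at the end is a genuine gap, not a technicality you postponed. The paper does \emph{not} analyze the two stages of \Algref{alg:arm_selection} at all; its entire correctness argument is to condition on the separation certificate
\[
\max_{a\notin \hat S}\bigl(\hat\mu_a(n)+\beta_a(n)\bigr)\;\le\;\min_{a\in \hat S}\bigl(\hat\mu_a(n)-\beta_a(n)\bigr)
\]
holding ``when terminated.'' On $\gE$ this certificate deterministically gives $\mu_a\le\hat\mu_a(n)+\beta_a(n)\le\hat\mu_b(n)-\beta_b(n)\le\mu_b$ for every $a\notin\hat S$ and $b\in\hat S$, hence $\hat S=S^\star$, with no gap condition and no dependence on $q$, $\alpha$, or $z$. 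That certificate is the stopping rule of the \emph{iterative} \Algref{alg:arm_selection_iterative}; the batched \Algref{alg:arm_selection} never checks it, which is precisely why your attempt to prove $S^\star\subseteq\sA_{\text{coarse}}$ unconditionally hits a wall: with a fixed budget there is simply no guarantee that the $\alpha m$ largest UCBs cover $S^\star$ unless you assume something like $\mu_{(m)}-\mu_{(m+1)}\ge 2\max_a\beta_a(n)$. Your Stage~I argument as written (``the oversampling factor $\alpha$ absorbs the $2\beta$ overshoot'') is not a proof---an arbitrarily unlucky instance with small gaps defeats any fixed $\alpha$.

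Concretely: to close the proof along the paper's route, replace your Stage~I/Stage~II analysis with (i) on $\gE$, the separation certificate forces $\hat S=S^\star$, and (ii) $\gE$ fails with probability at most $6M/n$; the conditioning on the certificate must then be acknowledged in the theorem statement (as the paper's ``when terminated'' implicitly does). To keep your unconditional route for the batched algorithm, you must promote your gap or sufficient-sampling condition to an explicit hypothesis linking $q$, $\alpha$, $z$ and the instance gaps; note also that your claim that $z$ extra pulls shrink $\beta_a$ by a factor of order $\sqrt{1+z/q}$ ignores that the $\ln n$ numerator grows as the total pull count $n$ increases. Either fix is legitimate, but as written neither your proposal nor a verbatim transcription of it proves the theorem for \Algref{alg:arm_selection}.
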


\begin{proof}
    When \Algref{alg:arm_selection} terminates, the following condition holds:
    $$ \max_{a\notin \hat{S}} \hat{\mu}_n(a) + \beta_a(n) \le \min_{a\in \hat{S}} \hat{\mu}_n(a) - \beta_a(n).$$
    According to Theorem~\ref{thm:bernstein-confidence-radius}, with probability at least $1-\frac{6}{n}$, we have $\left|\mu_a  - \hat{\mu}_a(n) \right| \leq  \beta_a(n)$ for all arms $a$. Therefore, for any $a \notin \hat{S}$,
    $$\mathcal{P}\left[a \in S^\star\right] \le 1-\frac{6}{n}.$$
    Thus, the probability that there does not exist such an arm $a$ is at least $1- \frac{6(M-m)}{n}$, where $m$ is size of the $\hat{S}$ set.
    And this completes the proof.
\end{proof}

\paragraph{Frame-level Regret Bound}
We define the frame-level regret as the difference between the optimal frame-level reward and the reward of the selected frames.
$$r_N^{\mathrm{frame}} = \sum_{t\in \sK^\star} y_t - \sum_{t\in \widehat{\sK}_n} y_t.$$

As long as we obtain the oracle top-$s$ set $S^\star$, the frame-level regret is also guaranteed to be small. As Frame-level sampling is actually finite so we can always find the top-$k$ frames with the highest rewards.
$$
\E r_N^{\mathrm{frame}} = \E \sum_{t\in \sK^\star} y_t - \sum_{t\in \widehat{\sK}_n} y_t \\
= \E \sum_{a\in S^\star} \sum_{t\in \sK_a^\star} 2\epsilon_{\psi}
= 0.
$$
For tighter bound, we leave this to future work.

\section{Visualizations of Failure Cases}
\label{sec:appendix:visualization_failure}
\begin{figure}[h]
    \centering
    \includegraphics[width=1.0\linewidth]{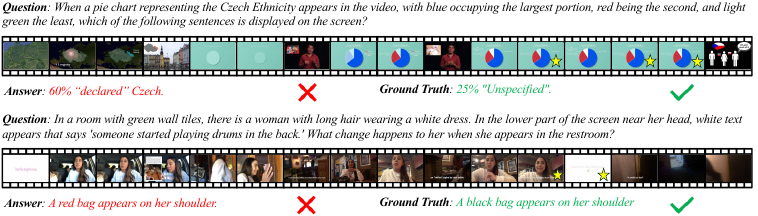}
    \caption{
        Two representative failure modes of LLaVA-Video-7B when using \method to select keyframes. Yellow stars mark manually annotated frames that are most informative for the query. In the first case, \method correctly selects these frames, but the MLLM still fails to answer due to its limited ability to reason over the relatively complex chart. In the second case, \method fails to capture the critical frames during a compact, rapid scene transition: the relevant segment lasts only 1-2 seconds within a 10-minute video, making the keyframes difficult to identify even for human experts.
      }
    \label{fig:visualization_failure}
\end{figure}

To provide a more comprehensive understanding of the proposed \method, we analyze two typical failure patterns of LLaVA-Video-7B when using \method to select keyframes in \Figref{fig:visualization_failure}, which most failure cases fall into.

In the first case, the query asks: ``When a pie chart representing the Czech ethnicity appears in the video, with blue occupying the largest portion, red the second, and light green the least, which of the following sentences is displayed on the screen?'' Across the entire video, this pie chart appears multiple times and is interleaved with other background content. Consequently, even though \method correctly selects the most informative frames, the MLLM is confused by the subtle differences between multiple similar pie charts. This failure pattern is mainly attributable to the limited reasoning and perception capabilities of the MLLM itself, rather than to the keyframe selection method.

In the second case, the video is a 10-minute vlog with frequent scene transitions. The query asks: ``In a room with green wall tiles, there is a woman with long hair wearing a white dress. In the lower part of the screen near her head, white text appears that says `someone started playing drums in the back.' What change happens to her when she appears in the restroom?'' The relevant segment lasts only 1--2 seconds within the 10-minute video, making the keyframes difficult to identify even for human experts. As shown in \Figref{fig:visualization_failure}, \method successfully selects frames where the correct text appears, but still fails to capture the most critical frames. This pattern reveals that, in some intrinsically challenging cases, the adaptive sampling strategy of \method may risk missing crucial information.

\section{Comparison with State-of-the-Art}
\label{sec:appendix:sota_comparison}
\begin{table}[h!]
    \centering
    {\setlength{\tabcolsep}{10pt}
    \begin{tabular}{@{} l l l l l @{} }
    \toprule
    \textbf{Model} & \textbf{\#Frame} & \textbf{LLM} & \textbf{LongVideoBench} & \textbf{Video-MME} \\
    \midrule
    Qwen2-VL-7B               & 32  & 7B & 55.6 & 57.4 \\
    Qwen2-VL-7B \w AKS         & 32  & 7B & 57.8 & \textbf{59.7} \\
    Qwen2-VL-7B \w Q-Frame         & 32  & 7B & 57.4 & 56.5 \\
    Qwen2-VL-7B \w Ours        & 32  & 7B & \textbf{62.3} \qinc{6.7}  & \textbf{59.7} \qinc{2.3}  \\
    LLaVA-OV-7B               & 32  & 7B & 54.8 & 56.5 \\
    LLaVA-OV-7B \w AKS         & 32  & 7B & 57.4 & 57.7 \\
    LLaVA-OV-7B \w Q-Frame         & 32  & 7B & 54.8 & 56.8 \\
    LLaVA-OV-7B \w Ours        & 32  & 7B & \textbf{60.7} \qinc{5.9}   & \textbf{58.3} \qinc{1.8}   \\
    LLaVA-Video-7B            & 64  & 7B & 58.9 & 64.4 \\
    LLaVA-Video-7B \w AKS         & 64  & 7B & 62.1 & 64.6 \\
    LLaVA-Video-7B \w Q-Frame         & 64  & 7B & 59.9 & 64.5 \\
    LLaVA-Video-7B \w Ours     & 64  & 7B & \textbf{63.5} \qinc{4.6} & \textbf{65.4} \qinc{1.0}   \\
    \bottomrule
    \end{tabular}
    }
    \caption{
    Video question-answering accuracy (\%) of different MLLMs on LongVideoBench and Video-MME. We compare \method with AKS and Q-Frame on Qwen2-VL, LLaVA-OV, and LLaVA-Video. The suffix “\w Ours” denotes models using keyframes selected by \method; likewise, “\w AKS” and “\w Q-Frame” indicate using keyframes from the corresponding baselines. \textbf{\#Frame} is the number of frames fed into the MLLM, and \textbf{LLM} denotes the language model size.
    }
    \label{tab:full_comparison}
\end{table}

Here we compare our proposed \method against state-of-the-art training-free keyframe selection methods on both LongVideoBench and Video-MME. 
Specifically, we consider two recent approaches based on vision-language similarity:
\begin{itemize}[leftmargin=1.25em]
    \item \textbf{AKS}~\citep{tang2025adaptive}: A plug-and-play adaptive keyframe sampling module that recursively balances query--frame relevance and temporal coverage under a fixed frame budget. By first downsampling the video to 1 frame per second, scoring each frame with a prompt--frame matching model, and then applying a judge-and-split procedure to allocate keyframe slots across segments, AKS maximizes informative coverage and serves as a strong state-of-the-art baseline for long-video QA.
    \item \textbf{Q-Frame}~\citep{zhang2025q}: A training-free, query-aware frame selection and multi-resolution adaptation framework that can be plugged in front of diverse Video-LLMs. It uses a text--image matching network (e.g., CLIP) to compute query--frame similarity scores, samples a compact set of highly relevant frames via stochastic selection, and assigns them heterogeneous resolutions so that crucial frames are preserved at high fidelity under a fixed token budget.
\end{itemize}

We report the results in \Tabref{tab:full_comparison}. Across all three backbones and both benchmarks, \method consistently outperforms both AKS and Q-Frame under the same frame budget. In particular, \method improves the plain Qwen2-VL-7B, LLaVA-OV-7B, and LLaVA-Video-7B models by 4.6--6.7\% on LongVideoBench and up to 2.3\% on Video-MME, indicating that our keyframe selection strategy transfers robustly across different MLLMs.

For the two compared baselines, AKS consistently outperforms Q-Frame on both LongVideoBench and Video-MME whenever Q-Frame is evaluated. We attribute this to the more sophisticated and adaptive sampling scheme of AKS, which explicitly balances query--frame relevance and temporal coverage instead of relying solely on similarity scores.

By contrast, Q-Frame behaves more like a token-compression mechanism: it maps a fixed frame budget to a fixed number of visual tokens so that the MLLM can "see" more frames than it is originally designed for. However, the lack of an explicit temporal sampling or search design means that Q-Frame does not actively reason about where informative moments occur in long videos, which limits its performance in the long-form setting.

\section{Experiments on More Benchmarks}
\label{sec:appendix:more_benchmarks}
\begin{table}[h!]
    \centering
    {\setlength{\tabcolsep}{10pt}
    \begin{tabular}{@{} l l l l l @{} }
    \toprule
    \textbf{Model} & \textbf{\#Frame} & \textbf{LLM} & \textbf{MLVU} & \textbf{VSI-Bench} \\
    \midrule
    Qwen2-VL-7B               & 32  & 7B & 59.7 & 36.5 \\
    Qwen2-VL-7B \w AKS         & 32  & 7B & 64.3 & 36.9 \\
    Qwen2-VL-7B \w Ours        & 32  & 7B & \textbf{67.0} \qinc{6.7}  & \textbf{39.0} \qinc{2.5}  \\
    LLaVA-Video-7B            & 64  & 7B & 68.2 & 41.7 \\
    LLaVA-Video-7B \w AKS         & 64  & 7B & 71.2 & 42.2 \\
    LLaVA-Video-7B \w Ours     & 64  & 7B & \textbf{72.7} \qinc{4.5} & \textbf{42.4} \qinc{0.7}   \\
    \bottomrule
    \end{tabular}
    }
    \caption{
    Video question-answering accuracy (\%) of different MLLMs on MLVU and VSI-Bench. We compare \method with AKS on Qwen2-VL and LLaVA-Video. The suffix “\w Ours” denotes models using keyframes selected by \method; likewise, “\w AKS” indicates using keyframes from the corresponding baselines. \textbf{\#Frame} is the number of frames fed into the MLLM, and \textbf{LLM} denotes the language model size.
    }
    \label{tab:more_benchmarks}
\end{table}

To further investigate the generalization ability of \method beyond long-form QA benchmarks, we conduct experiments on two additional datasets:
\begin{itemize}[leftmargin=1.25em]
    \item \textbf{MLVU}~\citep{zhou2025mlvu}: A comprehensive multi-task long-video understanding benchmark constructed from 1{,}730 long videos (3 minutes to 2 hours) spanning movies, surveillance, egocentric recordings, cartoons, and game videos. It defines nine evaluation tasks that jointly probe both global and local reasoning abilities of MLLMs, and reveals substantial performance degradation as video length grows.
    \item \textbf{VSI-Bench}~\citep{yang2025thinking}: A video-based visual–spatial intelligence benchmark built from 288 egocentric indoor videos (ScanNet, ScanNet++, ARKitScenes) with over 5{,}000 question–answer pairs. It focuses on 3D spatial understanding and memory from first-person streams, evaluating MLLMs on tasks such as spatial layout reasoning, navigation, and distance estimation.
\end{itemize}

We summarize the results in \Tabref{tab:more_benchmarks}. On MLVU, our method improves Qwen2-VL-7B from 59.7\% to 67.0\% (+7.3\%) and LLaVA-Video-7B from 68.2\% to 72.7\% (+4.5\%), while also outperforming AKS by +2.7\% and +1.5\% points, respectively. On VSI-Bench, which emphasizes fine-grained spatial reasoning over relatively short egocentric clips, our method still yields consistent gains: for Qwen2-VL-7B, accuracy increases from 36.5\% to 39.0\% (+2.5\%), and for LLaVA-Video-7B from 41.7\% to 42.4\% (+0.7\%), respectively. These results indicate that our temporal search mechanism generalizes well across different backbones and tasks, with particularly pronounced benefits on long and heterogeneous videos.

At the same time, the improvements on VSI-Bench are understandably smaller than on long-video benchmarks. When videos are short and informative content is more uniformly distributed, uniform sampling already captures many salient frames, leaving less headroom for sophisticated temporal search. We explicitly regard this as a limitation and a promising direction for future work on spatially-aware frame selection in low-redundancy settings.

\section{Ablation Studies}
\label{sec:appendix:ablation_studies}

\subsection{Two-Stage Exploration-Exploitation}
\label{sec:appendix:two_stage_exploration_exploitation}
One of the core designs of \method is the two-stage exploration-exploitation procedure. To better understand the contribution of each stage, we introduce two variants of \method:
\begin{itemize}[leftmargin=1.25em]
    \item \textbf{\method-C}: This variant only performs the coarse exploration stage to identify promising temporal arms. In the final keyframe selection step, it randomly samples frames from all frames within the selected arms without any further refinement.
    \item \textbf{\method-F}: This variant only performs the fine-grained exploration stage by uniformly sampling frames over the whole video and interpolating the rewards via nearest-neighbor assignment. The final keyframes are then drawn directly from the resulting video-level sampling distribution, without the arm-level pre-selection.
\end{itemize}

\begin{table}[h]
    \centering
    {\setlength{\tabcolsep}{8pt}\small
    \begin{tabular}{@{} l c c c c  @{} }
    \toprule
        & \textbf{Uniform} & \textbf{FOCUS-C} & \textbf{FOCUS-F} & \textbf{FOCUS} \\
    \midrule
    Qwen2-VL    & 55.6             & 61.7             & 61.5             & \textbf{62.3} \\
    LLaVA-OV    & 54.8             & 58.4             & 57.7             & \textbf{60.7} \\
    LLaVA-Video & 58.9             & 62.3             & 62.5             & \textbf{63.5} \\
    \bottomrule
    \end{tabular}
    }
    \caption{
        Ablation of the two-stage exploration-exploitation procedure on LongVideoBench. 
        \textbf{Uniform} denotes naive uniform frame sampling. 
        \textbf{\method-C} uses only the coarse exploration stage to select promising temporal arms, and then randomly samples frames within them. 
        \textbf{\method-F} uses only the fine-grained exploration stage over the entire video. 
        The full \textbf{\method} combines both stages and consistently achieves the best performance across all MLLMs, indicating that coarse arm selection and fine-grained refinement are complementary.
    }
    \label{tab:two_stage_ablation}
\end{table}

We conduct experiments on LongVideoBench with Qwen2-VL-7B, LLaVA-OV-7B, and LLaVA-Video-7B, and summarize the ablation results in \Tabref{tab:two_stage_ablation}. Both \method-C and \method-F provide substantial improvements over uniform sampling across all three backbones, demonstrating that coarse arm selection and fine-grained exploration are each effective on their own. The full two-stage variant further yields the best performance in all cases, achieving an additional gain of up to 2.3\% over the single-stage variants, which confirms that coarse localization of promising regions and subsequent fine-grained exploitation are complementary rather than interchangeable.

\subsection{Bernstein confidence radius}
\label{sec:appendix:bernstein_confidence_radius}
Compared with the classical UCB algorithm, the Bernstein confidence radius is more robust to high-variance rewards. To better understand its contribution, we introduce a variant of \method that relies on the empirical mean without a variance-aware exploration bonus when selecting top-relevance frames:
\begin{itemize}[leftmargin=1.25em]
    \item \textbf{\method-M}: This variant uses the empirical mean reward to rank arms and select top-relevance frames, instead of the Bernstein confidence radius.
\end{itemize}

\begin{table}[h]
    \centering
    {\setlength{\tabcolsep}{8pt}
    \begin{tabular}{@{} l c c c  @{} }
    \toprule
        & \textbf{Uniform} & \textbf{FOCUS-M} & \textbf{FOCUS} \\
    \midrule
    Qwen2-VL    & 55.6    & 61.7    & \textbf{62.3}  \\
    LLaVA-OV    & 54.8    & 58.1    & \textbf{60.7}  \\
    LLaVA-Video & 58.9    & 63.0    & \textbf{63.5}  \\
    \bottomrule
    \end{tabular}
    }
    \caption{
        Ablation of the Bernstein confidence radius on LongVideoBench. 
        \textbf{Uniform} denotes naive uniform frame sampling. 
        \textbf{FOCUS-M} uses the empirical mean to rank arms and select top-relevance frames. 
        The full \textbf{FOCUS} leverages the Bernstein confidence radius to form variance-aware upper confidence bounds.
    }
    \label{tab:bernstein_confidence_radius}
\end{table}

We summarize the results in \Tabref{tab:bernstein_confidence_radius}. The empirical-mean variant (FOCUS-M) already yields large gains over uniform sampling across all three backbones, showing that even a simple bandit-style selection is beneficial. However, the full method with the Bernstein confidence radius consistently achieves the best performance, providing up to 2.6\% improvement over uniform and up to 2.6\% improvement over the base models. This confirms that a variance-aware confidence radius is more effective than the empirical mean alone for selecting top-relevance frames, as it encourages additional exploration of high-uncertainty clips, especially when a clip contains diverse or rapidly changing scenes.




\subsection{Effect of Clip Length}
In the formulation of \method, each video is partitioned into fixed-length clips that serve as bandit arms. The clip length $l$ is a crucial hyper-parameter that controls the granularity of exploration and exploitation. To better understand its effect, we conduct experiments on LongVideoBench with LLaVA-Video-7B and summarize the results in \Tabref{tab:clip_length_ablation}.

\begin{table}[h]
    \centering
    {\setlength{\tabcolsep}{12pt}
    \begin{tabular}{@{} l c c c c @{} }
    \toprule
        & \textbf{Uniform} & \textbf{8s} & \textbf{16s} & \textbf{32s} \\
    \midrule
    ACC       & 58.9             & 63.7 & 63.5 & 62.3 \\
    GPU hours & --               & 8.1  & 5.5  & 4.1 \\
    \bottomrule
    \end{tabular}
    }
    \caption{
        Ablation of the clip length $l$ on LongVideoBench with LLaVA-Video-7B. 
        \textbf{Uniform} denotes naive uniform frame sampling (thus no additional GPU hours for keyframe selection are reported). 
        For \method, we vary the clip length from 8s to 32s and report both QA accuracy and the GPU hours required for keyframe selection. Note that the GPU hours are measured on a single NVIDIA H100 (80GB) GPU.
    }
    \label{tab:clip_length_ablation}
\end{table}

As shown in \Tabref{tab:clip_length_ablation}, all clip-length settings of \method significantly outperform uniform sampling (58.9\% vs.\ 62.3--63.7\%), indicating that our bandit-based selection is robust to the choice of $l$ over a reasonably wide range. Shorter clips (e.g., 8s) provide slightly better accuracy by enabling more fine-grained exploration, but they also incur higher computational cost, while longer clips (e.g., 32s) reduce GPU hours at the price of a modest performance drop. In practice, we find $l=16$ seconds to offer a good trade-off between accuracy and efficiency.

\subsection{Effect of Vision-Language Encoder}
\label{sec:appendix:effect_of_vision_language_encoder}
Our method can be seamlessly integrated with different vision-language encoders to estimate frame-query relevance scores. In the main experiments, we adopt BLIP to align with our primary baseline AKS for a fair comparison, and also because prior work has shown BLIP to be a robust and effective choice for frame-level relevance estimation. To provide a more comprehensive evaluation, we further conduct experiments with three encoders: CLIP~\citep{radford2021learning}, SigLIP~\citep{zhai2023sigmoid}, and BLIP~\citep{li2022blip}.

\begin{table}[h]
    \centering
    {\setlength{\tabcolsep}{12pt}
    \begin{tabular}{@{} l c c c c @{} }
    \toprule
    & \textbf{Uniform} & \textbf{CLIP} & \textbf{SigLIP} & \textbf{BLIP} \\
    \midrule
    ACC    & 58.9    & 60.2 & 60.9   & 63.5 \\
    \bottomrule
    \end{tabular}
    }
    \caption{
        Ablation of the vision-language encoder on LongVideoBench with LLaVA-Video-7B. 
        \textbf{Uniform} denotes naive uniform frame sampling. 
        For our method, we instantiate the frame-query scoring module with CLIP, SigLIP, and BLIP.
    }
    \label{tab:vision_language_encoder_ablation}
\end{table}

As summarized in \Tabref{tab:vision_language_encoder_ablation}, all three encoders yield clear improvements over uniform sampling, confirming that our bandit-based selection is compatible with different vision-language backbones. Among them, BLIP achieves the strongest performance, while CLIP and SigLIP still provide 1.3\% and 2.0\% gains, respectively. These results suggest that our framework is robust to the choice of encoder, but can further benefit from stronger frame-query relevance models, and that future advances in vision-language pretraining are likely to directly translate into better keyframe selection performance.

\section{Limitations}
In this work, we assume the frame-query relevance scores are i.i.d. and the temporal dependencies between frames are not considered. 
However, in practice, the frame-query relevance scores are dependent on the temporal dependencies between frames. As different parts may have strong correlations, this assumption may not hold.
In this setting, we can use the Lipschitz/metric bandit problem~\citep{kleinberg2008multi,bubeck2011x} or contextual bandit problem~\citep{chu2011contextual,agarwalb14} to model the problem. We leave this as future work.

\section{The Use of Large Language Models (LLMs)}
We used GPT-5 and Claude 4 solely for proofreading and light copy-editing (typos, grammar, and minor phrasing). All technical content, scientific claims, mathematical proofs, algorithms, experiment design and execution, dataset handling, figures, and evaluations were authored and verified by the human authors. LLMs were not used to generate ideas, code, data, results, or reviews; they did not contribute content at the level of a co-author. All suggested edits were manually inspected and accepted or rejected by the authors.

\end{document}